\newcolumntype{P}[1]{>{\centering\arraybackslash}p{#1}}
\theoremstyle{definition}
\newtheorem{definition}{Definition}[section]
\theoremstyle{remark}
\theoremstyle{plain}
\newtheorem{theorem}{Theorem}[section]
\theoremstyle{plain}
\theoremstyle{plain}
\newtheorem{proposition}[theorem]{Proposition}
\newtheorem{lemma}[theorem]{Lemma}
\def\eqref#1{(\ref{#1})}
\def\1{\bm{1}}
\def\rmA{{\mathbf{A}}}
\def\rmW{{\mathbf{W}}}
\def\vzero{{\bm{0}}}
\def\vb{{\bm{b}}}
\def\vc{{\bm{c}}}
\def\vh{{\bm{h}}}
\def\vq{{\bm{q}}}
\def\vt{{\bm{t}}}
\def\vx{{\bm{x}}}
\def\vz{{\bm{z}}}
\def\mW{{\bm{W}}}
\DeclareMathAlphabet{\mathsfit}{\encodingdefault}{\sfdefault}{m}{sl}
\SetMathAlphabet{\mathsfit}{bold}{\encodingdefault}{\sfdefault}{bx}{n}
\newcommand{\bq}{\mathbb{Q}}
\newcommand{\stk}{\bm{\omega}}
\newcommand{\topv}{\bm{\tau}_{t-1}^{top}}
\newcommand{\psh}{\bm{\tau}_{t-1}^{push}}
\newcommand{\vphi}{\bm{\phi}}
\newcommand{\vpsi}{\bm{\psi}}
\newcommand{\sigmoid}{\sigma}
\newcommand{\dyck}[1]{Dyck-$#1$}
\title{On the Practical Ability of Recurrent Neural Networks \\ to Recognize Hierarchical Languages}
\author{Satwik Bhattamishra$^\spadesuit$  \quad Kabir Ahuja$^\diamondsuit$\thanks{\, This research was conducted during the author's internship at Microsoft Research.} \quad Navin Goyal$^\spadesuit$\\
	$^\spadesuit$ Microsoft Research India\\
	$^\diamondsuit$ Udaan.com\\
	{\tt \small \{t-satbh,navingo\}@microsoft.com} \\
	{\tt \small kabir.ahuja@udaan.com} \\
}
\date{}
\begin{document}
\maketitle
\begin{abstract}
	
While recurrent models have been effective in NLP tasks, their performance on context-free languages (CFLs) has been found to be quite weak. Given that CFLs are believed to capture important phenomena such as hierarchical structure in natural languages, this discrepancy in performance calls for an explanation. We study the performance of recurrent models on \dyck{n} languages, a particularly important and well-studied class of CFLs. We find that while recurrent models generalize nearly perfectly if the lengths of the training and test strings are from the same range, they perform poorly if the test strings are longer. At the same time, we observe that recurrent models are expressive enough to recognize Dyck words of arbitrary lengths in finite precision if their depths are bounded. Hence, we evaluate our models on samples generated from Dyck languages with bounded depth and find that they are indeed able to generalize to much higher lengths. Since natural language datasets have nested dependencies of bounded depth, this may help explain why they perform well in modeling hierarchical dependencies in natural language data despite prior works indicating poor generalization performance on Dyck languages. We perform probing studies to support our results and provide comparisons with Transformers.

\end{abstract}
\section{Introduction}
\blfootnote{ This work is licensed under a Creative Commons Attribution 4.0 International Licence. Licence details: \url{http://creativecommons.org/licenses/by/4.0/}.}

Recurrent models (RNNs and more specifically LSTMs) have been used extensively across several NLP tasks such as machine translation \cite{sutskever2014sequence}, language modeling \cite{melis2017state} and question answering \cite{seo2016bidirectional}. Natural languages involve phenomena such as hierarchical and long-distance dependencies. Although RNNs are known to be Turing-complete \cite{siegelmann1992computational} given unbounded precision, their practical ability to model such phenomena remains unclear.

Recently, several works \cite{weiss2018practical,sennhauser-berwick-2018-evaluating,skachkova-etal-2018-closing} have attempted to understand the capabilities of LSTMs by empirically analyzing them on different types of formal languages. Natural languages, for the most part, can be modeled by context-free languages \cite{Jaeger-Rogers} and their hierarchical structure has been emphasized by \newcite{chomsky2002syntactic}. Thus studying the capabilities of RNNs in recognizing context-free languages (CFLs) can shed light on how well they can model hierarchical structures. An important family of context-free languages is the \dyck{n} language\footnote{Informally, the Chomsky--Sch{\"u}tzenberger representation theorem \shortcite{chomsky1959algebraic} asserts that \dyck{n} languages for $n \geq 1$ capture the complexity of context-free languages in a precise sense.}.

Previous works \cite{suzgun2019lstm,suzgun2019memory,yu-etal-2019-learning} showed that LSTMs achieve limited generalization performance on recognizing \dyck{2}. This prompted the development of memory-augmented variants \cite{joulin2015inferring,suzgun2019memory} of LSTMs which generalize well on Dyck languages but are notoriously hard to train and fail to perform well on practical NLP tasks \cite{NIPS2019_8748}. On the other hand, despite the limited performance of LSTMs on Dyck languages, several studies \cite{gulordava-etal-2018-colorless,tran-etal-2018-importance} have found that LSTMs perform well in modeling hierarchical structure in natural language data. In this work, we take a step towards bridging this gap.



\textbf{Our Contributions.} We investigate the ability of recurrent models to learn and generalize on Dyck languages. We first evaluate the ability of LSTMs to recognize randomly sampled \dyck{n} sequences and find, in contrast to prior works \cite{suzgun2019lstm,suzgun2019memory}, that they can generalize nearly perfectly when the test samples are within the same lengths as seen during training. Similar to prior works, when evaluated on randomly generated samples of higher lengths we observe limited performance. Dyck languages and (deterministic) CFLs can be recognized by (deterministic) pushdown automata (PDA). We construct an RNN that directly simulates a PDA given unbounded precision. A key observation is that the higher the depth of the stack the higher is the required precision. This implies that fixed precision RNNs are expressive enough to recognize strings of arbitrary lengths if the required depth of the stack is bounded. Based on this observation, we test the hypothesis whether LSTMs can generalize to higher lengths if the depth of the inputs in the training and test set is bounded by the same value. In the bounded depth setting, LSTMs are able to generalize to much higher lengths compared to the lengths used during training. Given that natural languages in practical settings also contain nested dependencies of bounded depths \cite{gibson1991computational,mcelree2001working}, this may help explain why LSTMs perform well in modeling natural language corpora containing nested dependencies. We then assess the generalization performance of the model across higher depths and find that although LSTMs can generalize to a certain extent, their performance degrades gradually with increasing depths. Since Transformer  \cite{vaswani2017attention}  is also a dominant model in NLP \cite{devlin-etal-2019-bert}, we include it in our experiments. To our knowledge, prior works have not empirically analyzed Transformers on formal languages, particularly context-free languages. We further conduct robustness experiments and probing studies to support our results.

\begin{figure}[t]
	\centering
	\includegraphics[width=.65\textwidth]{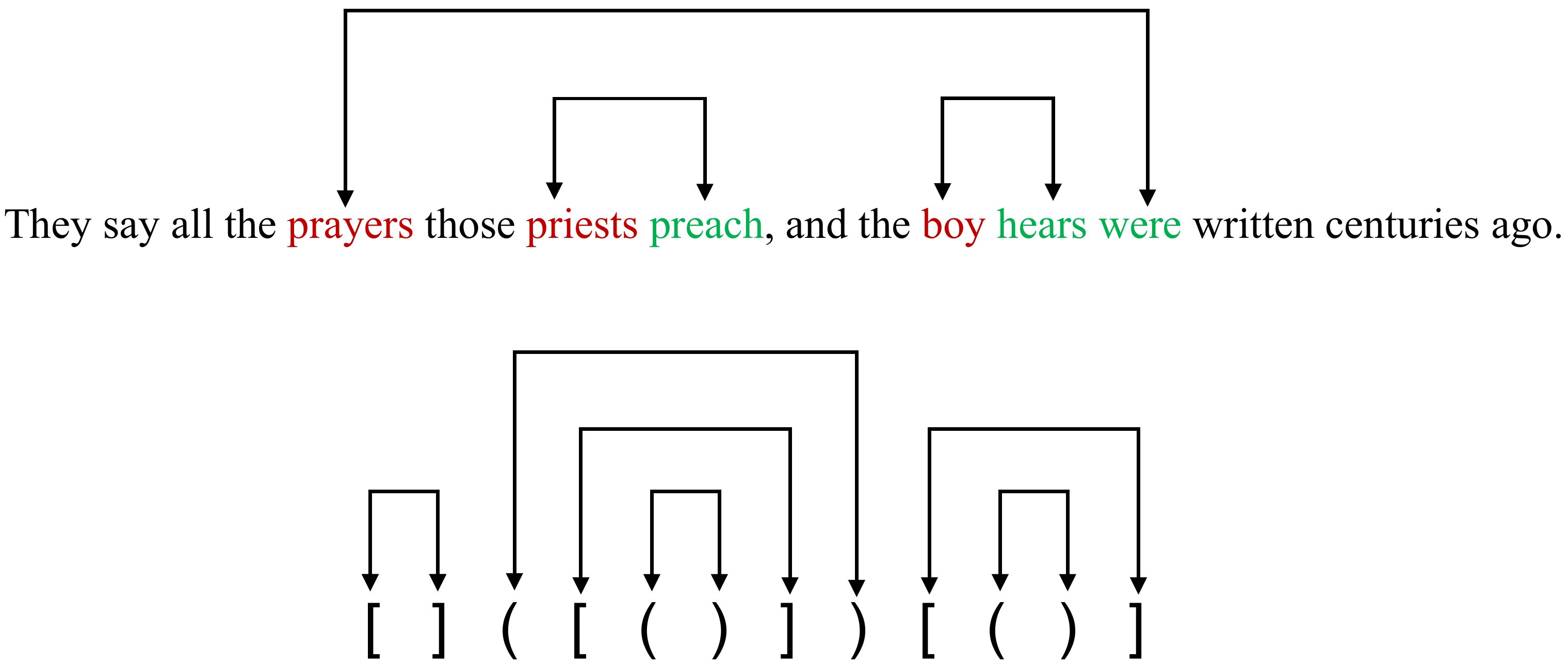}
	\caption{\label{fig:intro_fig} Nested dependencies in English sentences and in Dyck-2.}
\end{figure}

\section{Preliminaries and Motivation}\label{sec:def}

The language \dyck{n} parameterized by $n \geq 1$ consists of well-formed words from $n$ types of parentheses. Its derivation rules are: $S \rightarrow (_iS)_i, \; S \rightarrow SS, \; S \rightarrow \epsilon$ where $i \in \{1, \ldots, n\}$.  \dyck{n} is context-free for every $n$, \dyck{2} being crucial among them, since all \dyck{n} for $n>2$ can be reduced to \dyck{2} \cite{suzgun2019lstm}. 
For words in \dyck{n}, the required depth of the stack in its underlying PDA is the maximum number of unbalanced parentheses in a prefix. For instance, the word ( [ ( ) ] ) [ ]  in \dyck{2} has maximum depth of 3 corrsponding to the prefix ( [ (.

\textbf{RNNs.} RNNs are defined by the update rule $\vh_t = f(\vh_{t-1}, \vx_t)$, where the function $f$ could be a feedforward network, $\vh_t$ is the model's memory vector usually referred to as the hidden state vector and $\vx_t$ denotes the input vector at the $t$-th step. In practice, $f$ is usually a single layer feedforward network with $\textsf{tanh}$ or $\textsf{ReLU}$ activation. To mitigate the vanishing gradients problem, LSTMs \cite{hochreiter1997long}, a variant of RNNs with additional gates, is most commonly used in practice. In our experiments, we will primarily work with LSTMs. 

\subsection{Expressiveness Results}

\begin{proposition}\label{prop:pda}
	Any Deterministic Pushdown Automaton can be simulated by an RNN with $\textsf{ReLU}$ activation.
\end{proposition}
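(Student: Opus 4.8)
The plan is to give an explicit construction that stores the entire configuration of a DPDA $\gP = (Q, \Sigma, \Gamma, \delta, q_0, Z_0, F)$ inside the hidden state of a single ReLU recurrence, and then to verify that one application of the cell $f$ faithfully reproduces one move of $\gP$. I would partition the hidden state $\vh_t$ into three blocks: a one-hot vector in $\{0,1\}^{|Q|}$ recording the current control state, a scalar $s_t \in [0,1)$ encoding the stack, and a one-hot vector in $\{0,1\}^{|\Gamma|}$ caching the current top-of-stack symbol. The input $\vx_t$ would be the one-hot encoding of the symbol read at step $t$, and acceptance would be read off by a final linear-plus-threshold test on the state block.

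The heart of the argument is a numerical encoding of the stack as a base-$b$ fraction. Fixing an injection $\nu : \Gamma \to \{1, \dots, |\Gamma|\}$ and a base $b \ge 2|\Gamma| + 1$, I encode a stack $\gamma_1 \cdots \gamma_m$ (top first) by $s = \sum_{i=1}^m \nu(\gamma_i) b^{-i}$, which lies in $[0,\tfrac12]$. Under this encoding the three stack primitives become simple maps: pushing $c$ sends $s \mapsto s/b + \nu(c)/b$ and popping sends $s \mapsto b s - \nu(\mathrm{top})$, both affine and hence trivially implementable once the relevant symbol is selected; reading the top amounts to recovering $\lfloor b s \rfloor$. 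I would implement the read as a small ReLU subnetwork: because every symbol maps into $\{1,\dots,|\Gamma|\}$ and $b$ is large, the value $b s$ always lands in an interval $[\,\nu(\mathrm{top}),\, \nu(\mathrm{top}) + \tfrac12\,]$, leaving a clean gap before the next integer, so a fixed bank of clipped units of the form $\textsf{ReLU}(x) - \textsf{ReLU}(x-1)$ composed with a steep affine map can output the one-hot of the top symbol \emph{exactly}, independent of the deeper stack contents.

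With the top symbol, input symbol, and control state all available as one-hot blocks, the transition function $\delta$ is just a finite lookup table, and any finite lookup is realizable exactly by a two-layer ReLU network; this produces the next state, selects the push/pop action, and gates the corresponding affine update of $s$. I expect two steps to carry the real difficulty. The first is the exact digit extraction: it is what forces the ``depth $\Rightarrow$ precision'' phenomenon, since although identifying the top only needs to resolve which unit interval $bs$ occupies, faithfully \emph{storing and updating} $s$ over a stack of depth $d$ requires $\Theta(d)$ bits of precision, which is precisely the observation the paper builds on. The second, which I would handle carefully, is $\epsilon$-moves: a general DPDA may make several stackless or popping moves between consecutive input reads, so I would either restrict to (or normal-form into) machines making a bounded number of moves per symbol, or let the cell iterate a bounded number of micro-steps, noting that the real-time DPDAs relevant to \dyck{n} need only a single application of $f$ per input symbol.
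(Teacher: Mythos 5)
Your construction is correct and shares the paper's overall skeleton---store the DPDA configuration (one-hot control state plus a fractional stack encoding) in the hidden state, reduce $\delta$ to a finite lookup table realized exactly by a shallow network over paired one-hot vectors, and gate the affine push/pop updates with saturated units $\mathrm{ReLU}(x)-\mathrm{ReLU}(x-1)$---but it diverges in precisely the one component the paper deliberately redesigned: the stack encoding. You keep a single scalar $s=\sum_{i}\nu(\gamma_i)\,b^{-i}$ with base $b \ge 2|\Gamma|+1$ and recover the top symbol by digit extraction; your gap argument is sound, since the tail is bounded by $|\Gamma|/(b-1) \le \tfrac12$, so $bs$ always lies in $[\nu(\mathrm{top}),\,\nu(\mathrm{top})+\tfrac12]$ and a bank of steep clipped-linear thresholds outputs the one-hot of the top \emph{exactly}. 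The paper instead vectorizes: it maintains $|\Gamma|$ parallel binary Siegelmann--Sontag stacks in a vector $\stk_t \in \bq^{|\Gamma|}$, pushing the one-hot vector of a symbol coordinatewise in base $4$, so the top read is the single elementwise affine-plus-saturation map $\sigma(4\stk_{t-1}-2)$ and no digit-extraction subnetwork (hence no gap analysis) is needed; the price is $|\Gamma|$ hidden coordinates rather than one, which is immaterial. Your route buys a more compact encoding, closer to the original Siegelmann--Sontag scheme; the paper's buys that every stack primitive, including the top read, is a one-layer affine map under saturation, which keeps the per-step cell at five layers with essentially no case analysis. Two further points of comparison: your handling of $\epsilon$-moves (normal-forming to boundedly many moves per input symbol, or iterating bounded micro-steps, noting the real-time DPDAs for \dyck{n} need neither) is in fact more careful than the paper's, which simply feeds a special one-hot input symbol whenever the automaton would move on $\epsilon$; and both arguments surface the same depth-versus-precision phenomenon the Proposition is used for---in your encoding a stack of depth $d$ costs $\Theta(d)$ bits, while in the paper's the coordinates of $\stk_t$ shrink as $4^{-d}$.
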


We provide a proof by construction for the above result in Appendix~\ref{sec:construction} by using the Cantor-set like encoding scheme as introduced in Siegelmann and Sontag \shortcite{siegelmann1992computational} to emulate stack operations. The above result was first obtained by \newcite{korsky2019computational} somewhat indirectly using the \newcite{chomsky1959algebraic} theorem. The above Proposition implies that RNNs can recognize any deterministic CFL given unbounded precision. 
In the construction, the higher the required depth of the stack the higher the required precision. This implies that fixed precision RNNs are expressive enough to recognize strings of arbitrary lengths if the required depth of the stack is bounded. This can also be gleaned from the construction of \newcite{korsky2019computational} with some additional work\footnote{Since our construction, although novel, is not critical for the inferences, it has been moved to the appendix.}.

\section{Experiments}\label{sec:exp}


\begin{table*}[t]
	\scriptsize{\centering
		\begin{tabular}{P{6em}p{7em}P{8em}P{8em}P{8em}P{8em}P{8em}}
			\toprule
			\textbf{Language} & \textbf{Model} &
			\multicolumn{2}{c}{\textbf{Vanilla (Randomly Sampled)}}&
			\multicolumn{3}{c}{\textbf{Bounded Depth}}\\
			\cmidrule(lr){3-4} \cmidrule(lr){5-7}
			&&
			\textbf{Bin-1A Accuracy [2, 50]}$\uparrow$ & \textbf{Bin-2A Accuracy [52, 100]}$\uparrow$ &
			\textbf{Bin-1B Accuracy [2, 50]}$\uparrow$ &
			\textbf{Bin-2B Accuracy [52, 100]}$\uparrow$ &
			\textbf{Bin-3B Accuracy [102, 150]}$\uparrow$\\
			\midrule
			\multirow{2}{*}{\textbf{\dyck{2}}} & \textbf{LSTM} & \textbf{99.5}  & \textbf{75.1} & \textbf{99.9} & \textbf{99.6} & \textbf{96.0}\\
			&\textbf{Transformer} & 95.1 & 21.8 & \textbf{99.9} & 92.1 & 36.3\\
			\midrule
			\multirow{2}{*}{\textbf{\dyck{3}}}&\textbf{LSTM} & \textbf{97.3}  & \textbf{54.0} & \textbf{99.7} & \textbf{96.3} & \textbf{89.5}\\
			&\textbf{Transformer} & 87.7  & 26.2 & 90.1 & 48.9 & 6.4\\
			\midrule
			\multirow{2}{*}{\textbf{\dyck{4}}}&\textbf{LSTM} & \textbf{97.8}  & \textbf{50.7} & \textbf{99.9} &\textbf{ 95.1} & \textbf{87.0}\\
			&\textbf{Transformer} & 92.7  & 36.6 & 94.4 & 49.3 & 5.6\\
			\bottomrule
		\end{tabular}
		\caption{\label{tab:results} The performance of neural models on considered Dyck languages. The reported scores are obtained by averaging the accuracies of 5 best performing hyperparameter configurations of each model. All models are trained on inputs with length in [2,50] and evaluated on validation sets. Bin-2A contains higher lengths without any restriction on depth. In Bin-1B, Bin-2B and Bin-3B, the test inputs have their depths upper bounded by $10$.}
	}
\end{table*}

\textbf{Setup.} In our experiments, we consider three Languages, namely \dyck{2}, \dyck{3} and \dyck{4}. For each of the three languages, we generate 3 different types of training and validation sets. In the first case, we generate 10k strings for the training set within lengths [2, 50] and generate two validation sets each containing 1k strings within lengths [2, 50] and [52, 100] respectively without any restriction on the depth of the Dyck words. Our second setting also resembles the previous dataset in terms of lengths of the strings in the training and validation sets. However, in this case, we restrict all the strings to have depths in the range [1, 10]. This is to test the generalization ability across lengths when the depth is bounded. In the third case, we test the generalization ability across depths, when the lengths in train and validation sets are in the same interval [2,100]. Along with LSTMs, we also report the performance of Transformers (as used in GPT \cite{radford2018improving}) on each task since it is also a dominant model in NLP

\textbf{Tasks.} We train and evaluate our models on the Next Character Prediction Task (NCP) introduced in \newcite{gers2001lstm} and used in \newcite{suzgun2019lstm} and \newcite{suzgun2019memory}. Similar to an LM setup, the model is only presented with positive samples from a given language. In NCP, for a sequence of symbols  $s_1, s_2, \ldots, s_n$, the model is presented with the sequence  $s_1, s_2, \ldots, s_i$ at the $i^{th}$ step and the goal of the model is to predict the set of valid characters for the $(i+1)^{th}$ step, which can be represented as a $k$-hot vector. The model assigns a probability to each character in the vocabulary corresponding to its validity in the next step, which is achieved by applying sigmoid activation over the unnormalized scores predicted through its output layer. Following \newcite{suzgun2018evaluating} and \newcite{suzgun2019lstm}, we use mean squared error between the predicted probabilities and $k$-hot labels as the loss function. During inference, we use a threshold of 0.5 to obtain the final prediction. The model’s prediction is considered to be correct if and only if its output at every step is correct. The accuracy of the model over test samples is the fraction of total samples which are predicted correctly. Note that, this is a relatively stringent metric as a correct prediction is obtained only when the model's output is correct at every step as opposed to standard classification tasks. Refer to \newcite{bhattamishra2020ability} for a discussion on the choice of character prediction task and its relation with other tasks such as standard classification and language modeling. Details of the dataset and parameters relevant for reproducibility can be found in section \ref{sec:exp_details} in Appendix. We have made our source code available at \url{https://github.com/satwik77/RNNs-Context-Free}.

\subsection{Results}

Table \ref{tab:results} shows the performance of LSTMs and Transformers on the considered languages. When inputs are randomly sampled in a given range of lengths, LSTMs can generalize well on the validation set containing inputs of the same lengths as seen during training (Bin-1A)\footnote{This result is in disagreement with the results reported in \newcite{suzgun2019lstm} and \newcite{suzgun2019memory}.  The possible discrepancy could be due to more extensive hyperparameter tuning in our experiments. We found this holds even while tuning within the same constraints as mentioned in their setup.}, for all considered Dycks. However, for higher lengths (Bin-2A), it struggles to generalize on these languages. For the case when the depth is bounded, LSTMs generalize very well to much higher lengths (Bin 1B, 2B, and 3B). Transformers, on the other hand, fail to generalize to higher lengths in either case, which might be attributed to the fact that at test time, it receives positional encodings that it was not trained on. To investigate the generalization ability of models across increasing depths, we trained the models up to depth $15$ and evaluated on $5$ validation sets with an incremental increase in depth in each set (refer to Figure \ref{fig:depth_gen}). We found that the models were able to generalize up to a certain extent but their performance degraded gradually as we increase the depth. However, Transformers performed relatively better as compared to LSTMs.


\begin{figure}[t]
		\centering
		\includegraphics[width=.6\textwidth]{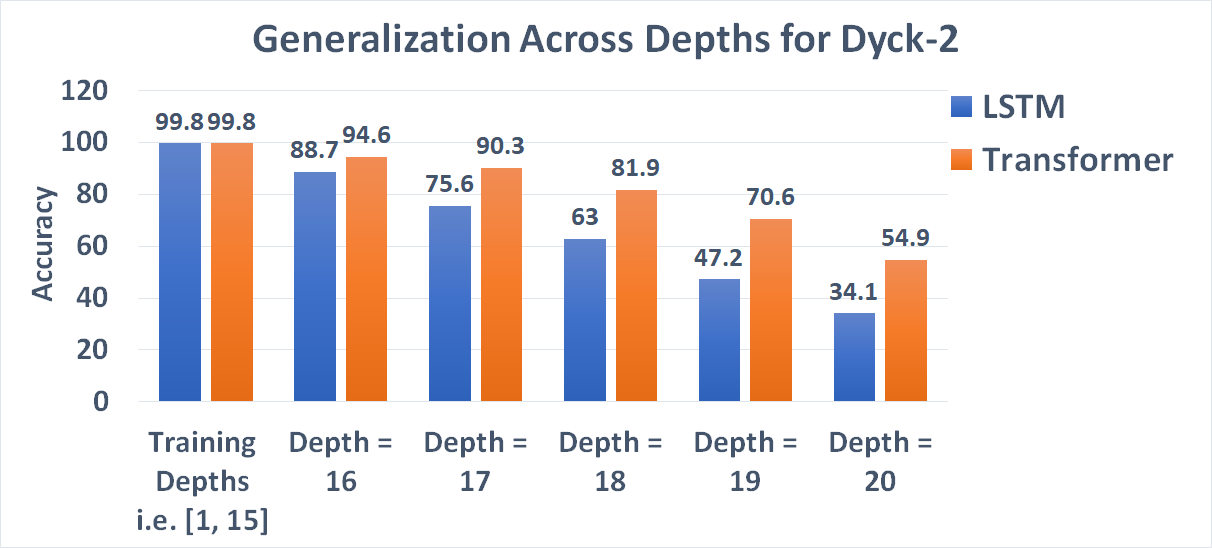}
		\caption{\label{fig:depth_gen} Generalization of LSTMs and Transformers on higher depths. The lengths of strings in the training set and all validation sets were fixed to lie between 2 to 100.}
\end{figure}

\noindent \textbf{Probing.} We also conducted probing experiments on LSTMs to better understand how they perform these particular tasks. We first attempt to extract the depth of the underlying stack from the cell state of an LSTM model trained to recognize \dyck{2}. We found that a single layer feedforward network was easily able to extract the depth with perfect ($100\%$) accuracy and generalize to unseen data. Figure \ref{fig:tsne_vis} shows a visualization of $t$-SNE projection of the hidden state labeled by their corresponding depths. Additionally, we also try to extract the stack elements from the hidden states of the network. For \dyck{2} samples within lengths [2,50] and depths [1,10], along with training LSTM for the NCP task, we co-train auxiliary classifiers to predict each element of the stack up to depth $10$, i.e. the hidden state of the LSTM that is used to predict the next set of valid characters is now also utilized in parallel to predict (by supplying 10 separate linear layers for each element) the elements of the stack.  We find that not only was the model able to predict the elements in a validation set from the same distribution, it was also able to extract the elements for sequences of higher lengths ([52,150]) on which it was not trained on (see Figure \ref{fig:stack_ext}). This further provides evidence to show that LSTMs are able to robustly generalize to inputs of higher lengths when their depths are bounded. We also conducted a few additional robustness experiments to ensure that the model does not overfit on training distribution. Details of probing tasks as well as additional robustness experiments can be found in the Appendix.

\begin{figure}[h]
	\centering
	\begin{subfigure}[b]{0.5\textwidth}
		\centering
		\includegraphics[width=.9\textwidth]{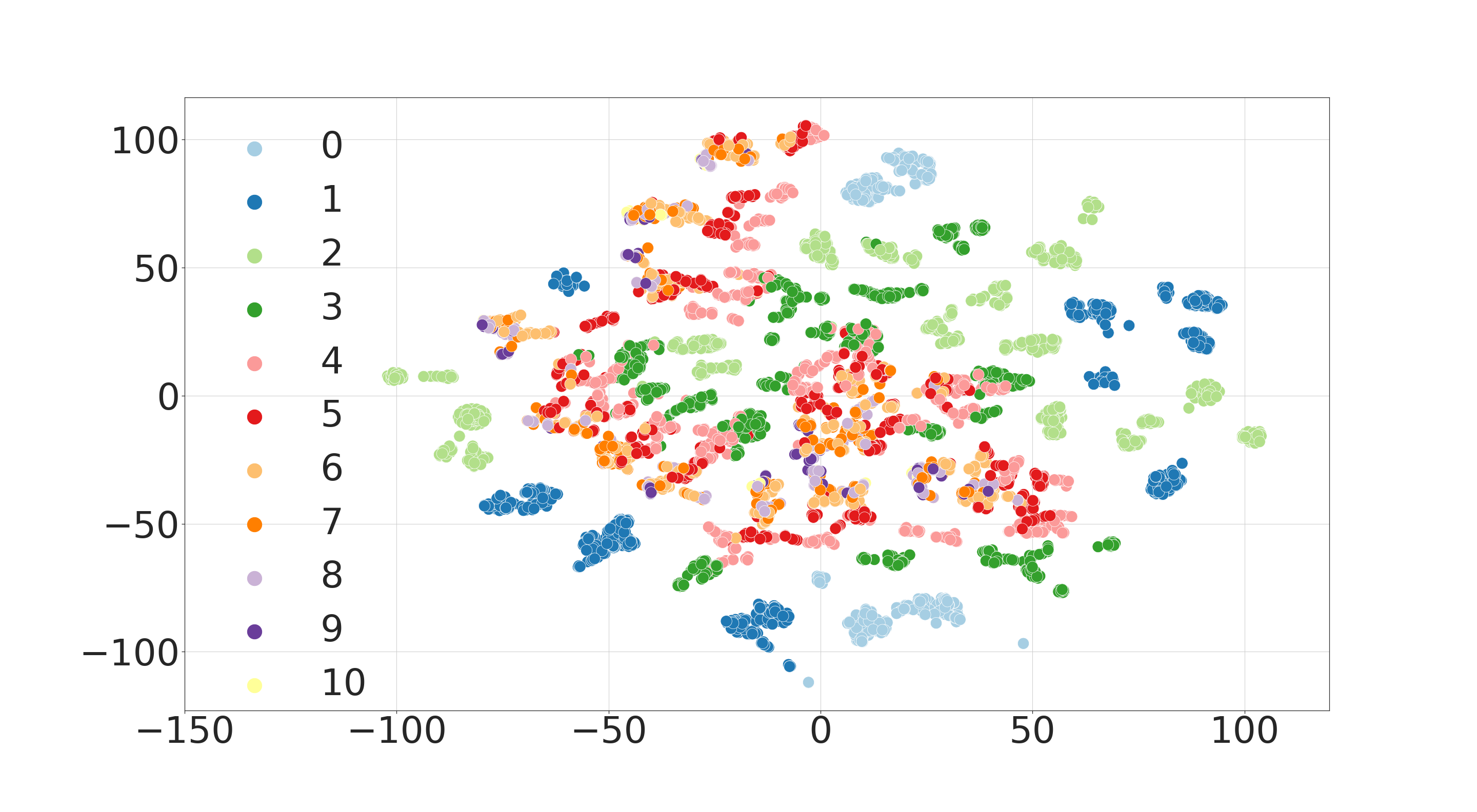}
		\caption{\label{fig:tsne_vis} Visualization of $t$-SNE Projections of the hidden states obtained from a pre-trained LSTM for different \dyck{2} substrings, colored by their depths.}
	\end{subfigure}%
	\begin{subfigure}[b]{0.5\textwidth}
		\centering
		\includegraphics[width=.9\textwidth]{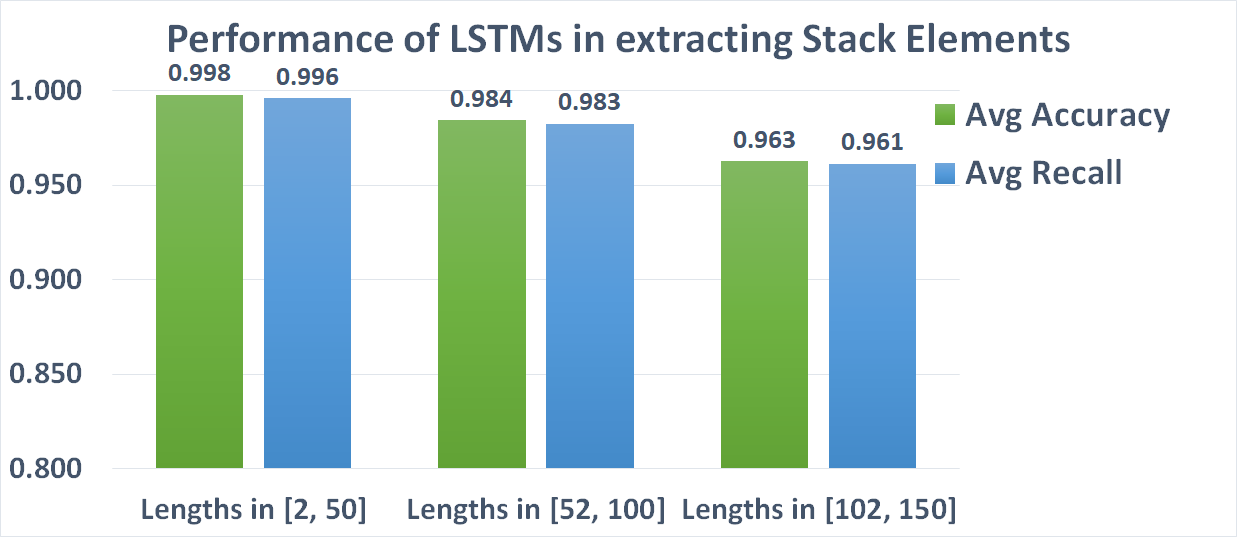}
		\caption{\label{fig:stack_ext} Accuracies and recalls obtained on extracting top-10 elements (averaged over them) of the stack corresponding to different \dyck{2} strings using the hidden state vector of the LSTM.}
	\end{subfigure}
	\caption{}
\end{figure}

\section{Discussion}\label{sec:res}

LSTMs and Transformers have been effective on language modeling tasks. In practice, during language modeling on a natural language corpus, the entire input is fed sequentially to the LSTM. Hence, the length of the input processed by the LSTM is bound to be large, requiring it to model a number of nested dependencies. Prior works in psycholinguistics \cite{gibson1991computational,mcelree2001working} have pointed out that given limited working memory of humans, natural language as used in practice should have nested dependencies of bounded depth. Some works \cite{jin2018unsupervised,noji2016using} have even sought to build parsers for natural language with depth-bounded PCFGs. Our generalization results for LSTMs on depth-bounded CFGs could help explain why they perform well on modeling hierarchical dependencies in natural language datasets. For Transformer, although it did not generalize to higher lengths, but in practice Transformers (as used in BERT and GPT) process inputs in a fixed-length context window. Our results indicate that it does not have trouble in generalizing when the train and validation sets contain inputs of the same lengths.

Our experiments also demonstrate that the limiting factor in LSTMs is in generalizing to higher depths as opposed to its memory-augmented variants. The exact mechanism with which trained LSTMs perform the task is not entirely clear. The limited performance of LSTMs could be due to precision issues or unstable stack encoding mechanism \cite{stogin2020provably}. However, given that natural language datasets are likely to have nested dependencies of bounded depth, this limitation may not play a significant role and may help explain why prior works \cite{gulordava-etal-2018-colorless,tran-etal-2018-importance} have found LSTMs to perform well in modeling hierarchical structure on natural language datasets.

\section*{Acknowledgements}
We thank the anonymous reviewers for their constructive comments and suggestions. We would also like to thank our colleagues at Microsoft Research and Michael Hahn for their valuable feedback and helpful discussions.

\bibliographystyle{coling}
\bibliography{citations}

\clearpage
\appendix

\section{Preliminaries}\label{sec:definitions}

\begin{definition}[Deterministic Pushdown Automata \cite{hopcroft2001introduction}]
	A DPDA is a $7$-tuple $\langle \Sigma, Q, \Gamma, q_0, Z_0,\delta, F \rangle$ with
	\begin{enumerate}
		\item A finite alphabet $\Sigma$
		\item A finite set of states $Q$
		\item A finite stack alphabet
		\item An initial state $q_0$
		\item An initial stack symbol $Z_0$
		\item $F \subseteq Q$ set of accepting states

		\item A state transition function
		\begin{equation*}
		\delta : \Sigma \times Q \times \Gamma \rightarrow (q, \gamma)
		\end{equation*}
		The output of $\delta$ is a finite set of pairs $(q, \gamma)$, where $q$ is a new state and $\gamma$ is the string of stack symbols that replaces the top of the stack. If $X$ is at the top of the stack, then $\gamma= \epsilon$ indicates that the stack is popped, $\gamma=X$ denotes that the stack is unchanged and if $\gamma = YZ$, then $X$ is replaced by $Z$, and $Y$ is pushed onto the stack.

	\end{enumerate}
\end{definition}

A machine processes an input string $x \in \Sigma^*$ one token at a time. For each input token, the machine looks at the current input, state and top of the stack to make a transition into a new state and update its stack. The machine can also take empty string as input and make a transition based on the stack and its current state. The machine halts after reading the input and a string is accepted if the state after reading the complete input is a final state.

	\subsection{Cantor-set Encodings} \label{subsec:cantor}
	
	In our construction, we will make use of Cantor-set like encodings as introduced in \cite{siegelmann1992computational}. The Cantor-set like encodings in base-$4$ provides us a means to encode a stack of values $0$s and $1$s and easily apply stack operations like push, pop and top to update them. Let $\upsilon_s$ denote the encoding of a stack. The contents of the stack can be viewed as a rational number $\frac{p}{4^q} $ where $0 < p < 4^q$. The $i$-th element from the top of the stack can be seen as the $i$-th element to the right of the decimal point the in a base-$4$ expansion. A $0$ stored in the stacked is associated with a $1$ in the expansion while a $1$ stored in the stack is associated with $3$. Hence, only numbers of the special form $\Sigma_{i=1}^{t} \frac{a_i}{4^i}$ where $a_i \in \{1,3\}$ will appear in the encoding. For inputs of the form $I \in \{0,1\}$, the standard stack operations can be applied by simple affine operations. For instance, push($I$) operation can be obtained by $\upsilon_s \mapsto \frac{1}{4}\upsilon_s + \frac{1}{2}I + \frac{1}{4}$ and the pop($I$) can be obtained by $\upsilon_s \mapsto 4\upsilon_s - 2I -1$. The top of the stack can be obtained by $\sigma(4\upsilon_s -2)$ which will be $1$ if the top of the stack is $1$ else $0$. The emptiness of a stack can be checked by $\sigma(\upsilon_s)$ which will be $1$ if the stack is nonempty or else it will be $0$.

\section{Construction}\label{sec:construction}

We will make use of some intermediate notions to describe our construction. We will use these multiple times in our construction. Particularly, Lemma \ref{lem:map_pair} will be used to combine the information of the state vector, input and the symbol at the top of the stack. Lemma \ref{lem:map_trans} and Lemma \ref{lem:map_cont} will be used to implement the state transition and decisions related to stack operations.

For the feed-forward networks we use the activation as in \cite{siegelmann1992computational}, namely the saturated linear sigmoid activation function:
\begin{equation}
\sigmoid(x) = \left\{  
\begin{array}{cc}
0 & \quad \text{if  } x < 0, \\
x & \quad \text{if  } 0 \leq x \leq 1, \\
1 & \quad \text{if  } x > 1.
\end{array}
\right.
\end{equation}
Note that, we can easily work with the standard $\mathsf{ReLU}$ activation via $\sigmoid(x) = \mathsf{ReLU}(x) - \mathsf{ReLU}(x-1)$.

Consider two sets $\Phi$ and $\Psi$ (such as the set of states $Q$ and the set of inputs $\Sigma$), and consider the one-hot representations of their elements $\phi \in \Phi$ and $\psi \in \Psi$ as $\vphi \in \bq^{|\Phi|}$ and $\vpsi \in \bq^{|\Psi|}$ respectively. We use $(\phi, \psi) \in \bq^{|\Phi|\times |\Psi|}$ to denote a unique one-hot encoding for each pair of $\phi$ and $\psi$. More specifically, consider the enumerations $\pi_{\Phi} : \Phi \rightarrow \{1, 2, \ldots , |\Phi|\}$ and $\pi_{\Psi} : \Psi \rightarrow \{1, 2, \ldots , |\Psi|\}$. Then given two elements $\phi \in \Phi$ and $\psi \in \Psi$, the vector $(\phi, \psi) \in \bq^{|\Phi|\times |\Psi|}$ will have a $1$ in position $(\pi_{\Psi}(\psi)-1)|\Phi| + \pi_{\Phi}(\phi) $ and a $0$ in every other position. We now prove that given a vector $[\vphi, \vpsi]$ containing concatenation of one-hot representations of the elements $\phi$ and $\psi$, there exists a single-layer feedforward network that can produce the vector $(\phi, \psi)$.

\begin{lemma}\label{lem:map_pair}
	There exists a function $O(\vx): \bq^{|\Phi| + |\Psi|} \rightarrow \bq^{|\Phi||\Psi|}$ of the form $\sigma(\vx \rmW+\vb)$ such that,
	\[
	O([\vphi, \vpsi]) = [(\phi, \psi)]
	\]
\end{lemma}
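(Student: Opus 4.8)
The plan is to realize the pair encoding coordinatewise as a logical AND of the two input one-hot vectors, which the saturated-linear sigmoid $\sigma$ computes exactly. I index the $|\Phi||\Psi|$ output coordinates by $k = (j-1)|\Phi| + i$ with $i \in \{1,\dots,|\Phi|\}$ and $j \in \{1,\dots,|\Psi|\}$, matching the definition of $(\phi,\psi)$ given above. The goal is to make the $k$-th output equal to $1$ exactly when $\vphi_i = 1$ and $\vpsi_j = 1$, i.e.\ when $\phi$ is the $i$-th element of $\Phi$ and $\psi$ is the $j$-th element of $\Psi$.

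First I would record the AND-gate identity: for $a,b \in \{0,1\}$ one has $\sigma(a + b - 1) = a \wedge b$, since the argument equals $1$ when $a=b=1$, equals $0$ when exactly one of them is $1$, and equals $-1 \le 0$ when both vanish, so $\sigma$ returns $1,0,0$ respectively. Hence it suffices to feed coordinate $k$ the preactivation $\vphi_i + \vpsi_j - 1$.

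Next I would exhibit the weight matrix $\rmW \in \bq^{(|\Phi|+|\Psi|)\times |\Phi||\Psi|}$ for the input $\vx = [\vphi,\vpsi]$, whose first $|\Phi|$ rows carry $\vphi$ and whose last $|\Psi|$ rows carry $\vpsi$. Setting $W_{i,k} = 1$ whenever $k = (j-1)|\Phi|+i$ for some $j \in \{1,\dots,|\Psi|\}$ (for $i \le |\Phi|$), setting $W_{|\Phi|+j,\,k} = 1$ whenever $(j-1)|\Phi| < k \le j|\Phi|$, and taking all other entries to be $0$, the $k$-th entry of $\vx\rmW$ becomes $\vphi_i + \vpsi_j$. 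Choosing the bias $\vb = -\vone$ then produces the preactivation $\vphi_i + \vpsi_j - 1$ in every coordinate, so that $O(\vx) = \sigma(\vx\rmW + \vb)$ is the claimed map.

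Finally I would check correctness: because $\vphi$ and $\vpsi$ are one-hot, exactly one $\vphi_i$ and one $\vpsi_j$ equal $1$, so by the AND computation exactly one output coordinate is nonzero, namely the one at $k = (\pi_\Psi(\psi)-1)|\Phi| + \pi_\Phi(\phi)$, and it equals $1$; all other coordinates are $0$. This is precisely the encoding $[(\phi,\psi)]$. The only delicate point is the index bookkeeping in defining $\rmW$ so that the correct pair of input coordinates is summed into each output coordinate; once the coordinate layout is pinned down, the AND-gate identity does all the work, so I expect no genuine obstacle beyond careful indexing.
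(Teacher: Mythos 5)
Your proof is correct and is essentially the paper's own argument: the paper's transformation $\vt_{(\phi,\psi)} = [\vphi+\vpsi\rmA_1, \ldots, \vphi+\vpsi\rmA_{|\Psi|}]$ places exactly the preactivation $\vphi_i + \vpsi_j$ at coordinate $(j-1)|\Phi|+i$ and then applies the same bias $-\bm{1}$ and saturation, i.e., the identical AND-gate realization $\sigma(\vphi_i+\vpsi_j-1)$ that you use. Your explicit entrywise specification of $\rmW$ is simply an unpacking of the paper's block matrices $\rmA_j$.
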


\begin{proof}
	
	Let $\rmA_i$ for $i \in \{1, \ldots, |\Phi|\}$ denote a matrices of dimensions $|\Psi| \times |\Phi|$ such that $\rmA_i$ has $1$s in its $i$-th row and $0$ everywhere else. For any one-hot vector $\vphi$, note that $\vphi \rmA_i = \bm{1}$ if $i= \pi_{\Phi}(\phi)$ or else it is $\vzero$. Thus, consider the transformation,
	
	\[
	\vt_{(\phi,\psi)} = [\ \vphi+\vpsi\rmA_1, \vphi+\vpsi\rmA_2, \ldots, \vphi+\vpsi\rmA_{|\Psi|} ]
	\]

	Note that, the vector $\vt_{(\phi,\psi)}$ has a value $2$ exactly at the position $(\pi_{\Psi}(\psi)-1)|\Phi| + \pi_{\Phi}(\phi) $ and it is either $0$ or $1$ at the rest of the positions. Hence by making use of bias vectors, it is easy to obtain $[(\phi, \psi)]$,
	\[
	\sigma(\vt_{(\phi,\psi)} - \bm{1}) = [(\phi, \psi)]
	\]
	
	which is what we wanted to show.
	
\end{proof}

As an example, consider two sets $\Phi$ and $\Psi$ such that $|\Phi|=3$ and $|\Psi|=2$. Consider two elements $\phi \in \Phi$ and $\psi \in \Psi$ such that $\pi_{\Phi}(\phi)=3$ and $\pi_{\Psi}(\psi) =2$. Hence, this implies the corresponding one-hot vectors $\vphi=[0, 0, 1]$ and $\vpsi =[0, 1]$. According to the construction above, the weight matrix will be,

\[
\mW = \left[\begin{array}{cccccc}
1&0& 0 &1&0&0\\
0&1& 0&0&1&0\\
0&0& 1 &0&0&1\\
\hline
1&1&1 &0&0 & 0\\
0&0&0 &1&1 &1\\
\end{array}\right],
\]

Thus by construction, the output of the feedforward network $\sigma([\vphi, \vpsi]\rmW - \bm{1}) = [0,0,0,0,0,1]$. A similar technique was employed by \cite{perez2019turing} in their Turing completeness result for Transformer. 

We will describe another technical lemma that we will make use of to implement our transition functions and other such mappings. Consider two sets $\Phi$ and $\Psi$ (such as the set of states $Q$ and the set of inputs $\Sigma$ or set of stack symbols $\Gamma$), and consider the one-hot representations of their elements $\phi \in \Phi$ and $\psi \in \Psi$ as $\vphi \in \bq^{|\Phi|}$ and $\vpsi \in \bq^{|\Psi|}$ respectively. Let $\delta : \Phi \times \Psi \rightarrow \Phi$ be any transition function that takes elements of two sets as input and produces an element of one of the sets. Let $[(\phi, \psi)] \in \bq^{|\Phi|\times |\Psi|}$ denote a unique one-hot encoding for each pair of $\phi$ and $\psi$ as defined earlier. We demonstrate that given  $[(\phi_1, \psi)]$ as input, there exists a single layer feedforward network that produces the vector $\vphi_2$ if $\delta(\phi_1, \psi) = \phi_2$.

\begin{lemma}\label{lem:map_trans}
	There exists a function $O(\vx): \bq^{|\Phi||\Psi|} \rightarrow \bq^{|\Phi|}$ of the form $\sigma(\vx \rmW+\vb)$ such that,
	\[
	O([(\phi_1, \psi)]) = \vphi_2
	\]
\end{lemma}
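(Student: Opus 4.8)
The plan is to realize $O$ as a single linear ``lookup table'' followed by the saturated activation $\sigma$, exploiting the fact that the input $[(\phi_1,\psi)]$ is a genuine one-hot vector. Recall from the pairing construction that $[(\phi_1,\psi)]$ has its unique nonzero entry, equal to $1$, at position $r := (\pi_{\Psi}(\psi)-1)|\Phi| + \pi_{\Phi}(\phi_1)$. Left-multiplying any matrix $\rmW \in \bq^{|\Phi||\Psi| \times |\Phi|}$ by such a one-hot row vector simply extracts the $r$-th row of $\rmW$. So it suffices to place the desired outputs into the rows of $\rmW$.

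Concretely, I would define $\rmW$ row by row: for each pair $(\phi_1,\psi)$ with index $r$ as above, set the $r$-th row of $\rmW$ equal to $\vphi_2$, the one-hot encoding of $\phi_2 = \delta(\phi_1,\psi)$. Since $\delta$ is a fixed, finite transition function, this is a well-defined finite specification of every entry of $\rmW$. Then by the row-selection property, $[(\phi_1,\psi)]\rmW = \vphi_2$ for every pair in the domain of $\delta$.

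It remains only to handle the activation. I would take the bias $\vb = \vzero$. Because $\vphi_2 \in \{0,1\}^{|\Phi|}$ and the saturated linear sigmoid fixes both endpoints (that is, $\sigma(0)=0$ and $\sigma(1)=1$), we obtain $O([(\phi_1,\psi)]) = \sigma([(\phi_1,\psi)]\rmW + \vb) = \sigma(\vphi_2) = \vphi_2$, which is exactly what the lemma asserts.

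There is essentially no hard step here: the construction is a direct encoding of the finite transition table into the weights, and the only things to verify are that (i) multiplication by a one-hot vector selects a single row, and (ii) $\sigma$ acts as the identity on $\{0,1\}$-valued vectors, so that no clipping or rescaling corrupts the output. If one preferred to derive this from Lemma~\ref{lem:map_pair} rather than build $\rmW$ from scratch, the same idea applies and only the bookkeeping of indices changes, but the direct lookup-table construction is the cleanest route and makes the role of the one-hot pairing encoding transparent.
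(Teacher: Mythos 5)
Your proof is correct and is essentially the paper's own argument: the paper likewise builds the matrix $\rmW$ as a lookup table whose row at index $(\pi_{\Psi}(\psi)-1)|\Phi| + \pi_{\Phi}(\phi_1)$ is $\vphi_2$, relying on the one-hot input to select that row. Your only addition is the explicit (and welcome) check that $\sigma$ with zero bias acts as the identity on $\{0,1\}$-valued outputs, which the paper leaves implicit.
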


\begin{proof}
	This can easily implemented using a linear transformation. Consider a matrix $\rmA \in \bq^{|\Phi| |\Psi| \times |\Phi|}$. Given two inputs $\phi_i, \phi_k \in \Phi$ and $\psi_j \in \Psi$  such that $\delta(\phi_i, \psi_j) = \phi_k$, the row $(\pi_{\Psi}(\psi_j)-1)|\Phi| + \pi_{\Phi}(\phi_i) $ of the matrix $\rmA$ will be the one-hot vector corresponding to $\phi_k$, that is, $\rmA_{(\pi_{\Psi}(\psi_j)-1)|\Phi| + \pi_{\Phi}(\phi_i),:} = \vphi_k$.

\end{proof}

Similarly, a mapping $\theta: \Phi \times \Psi \rightarrow \{0,1\}^{n}$ can also be implemented using linear transformation using a transformation matrix $\rmA \in \bq^{|\Phi| |\Psi| \times n}$ where each row of the matrix $\rmA$ will be the corresponding mapping $\{0,1\}^n$ for the pair of $(\phi,\psi)$ corresponding to that row.

\begin{lemma}\label{lem:map_cont}
	There exists a function $O(\vx): \bq^{|\Phi||\Psi|} \rightarrow \{0,1\}^n$ of the form $\sigma(\vx \rmW+\vb)$ such that,
	\[
	O([(\phi, \psi)]) = [\{0,1\}^n]
	\]
\end{lemma}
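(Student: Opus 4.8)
The plan is to observe that this is the natural generalization of Lemma~\ref{lem:map_trans}: there the target was a one-hot vector $\vphi_2 \in \bq^{|\Phi|}$, whereas here the target is an arbitrary binary vector in $\{0,1\}^n$. The same lookup-table argument goes through verbatim, so I would give the construction directly and point out the one place where it differs.

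First I would fix the desired mapping $\theta : \Phi \times \Psi \rightarrow \{0,1\}^n$ that we wish $O$ to realize, and define the matrix $\rmA \in \bq^{|\Phi||\Psi| \times n}$ by setting its row indexed $(\pi_{\Psi}(\psi)-1)|\Phi| + \pi_{\Phi}(\phi)$ equal to $\theta(\phi,\psi)$, for each pair $(\phi,\psi)$. This is exactly the row-indexing convention used for the one-hot encoding $[(\phi,\psi)]$ in Lemma~\ref{lem:map_pair}.

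Next, since $[(\phi,\psi)]$ is one-hot with its single $1$ precisely at position $(\pi_{\Psi}(\psi)-1)|\Phi| + \pi_{\Phi}(\phi)$, the product $[(\phi,\psi)]\,\rmA$ selects exactly that row, i.e. $[(\phi,\psi)]\,\rmA = \theta(\phi,\psi)$. This already gives the desired vector before any nonlinearity is applied.

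The only step that needs a word of care—and it is the ``obstacle,'' though a trivial one—is checking that wrapping the linear map in the activation $\sigma$ does no harm. Because every entry of $\theta(\phi,\psi)$ lies in $\{0,1\} \subseteq [0,1]$, and the saturated linear sigmoid satisfies $\sigma(x)=x$ on $[0,1]$, taking $\rmW = \rmA$ and $\vb = \vzero$ yields $\sigma([(\phi,\psi)]\,\rmW + \vb) = \theta(\phi,\psi)$, which is of the required form $\sigma(\vx\rmW+\vb)$ and outputs the intended element of $\{0,1\}^n$. Hence $O$ exists as claimed.
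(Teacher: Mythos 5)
Your proof is correct and takes essentially the same route as the paper, whose own proof simply points back to the lookup-table construction of Lemma~\ref{lem:map_trans}: the matrix $\rmA \in \bq^{|\Phi||\Psi| \times n}$ with the row indexed by $(\pi_{\Psi}(\psi)-1)|\Phi| + \pi_{\Phi}(\phi)$ set to $\theta(\phi,\psi)$, selected by the one-hot input. Your explicit check that $\sigma$ acts as the identity on $\{0,1\}$-valued entries (so $\rmW=\rmA$, $\vb=\vzero$ suffices) is a detail the paper leaves implicit, and it is a welcome addition.
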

Proof is similar to proof of Lemma \ref{lem:map_trans}.

\begin{proposition}\label{prop:pda}
	For any Deterministic Pushdown Automaton, there exists an RNN that can simulate it.
\end{proposition}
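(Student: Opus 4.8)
The plan is to give a proof by construction: I build an RNN with the saturated-linear-sigmoid activation $\sigma$ (equivalently $\mathsf{ReLU}$, as noted above) whose hidden state $\vh_t$ stores a faithful encoding of the DPDA configuration reached after reading the first $t$ input symbols, and whose update map $f$ realizes exactly one DPDA transition per step. The hidden state is organized as the concatenation of three parts: a one-hot encoding $\vq_t \in \bq^{|Q|}$ of the current control state, the Cantor-set rational $\upsilon_s$ encoding the current stack contents (the base-$4$ scheme of Section~\ref{subsec:cantor}, generalized so that each of the $|\Gamma|$ stack symbols corresponds to a distinct odd digit), and some scratch coordinates used within a step. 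The input $\vx_t$ is the one-hot encoding of the current symbol of $\Sigma$.

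I would then implement a single transition as a short composition of affine maps and $\sigma$-nonlinearities, all foldable into one multilayer feedforward $f$. First, read the top-of-stack symbol: the read-outs $\sigma(4\upsilon_s - 2)$ and $\sigma(\upsilon_s)$ of Section~\ref{subsec:cantor} generalize to a small thresholding layer that returns the one-hot of the leading stack digit together with an emptiness flag. Second, fold the three one-hot vectors $\vq_t$, this top symbol, and $\vx_t$ into a single combined one-hot by two applications of Lemma~\ref{lem:map_pair}: first combine state and top symbol over $Q \times \Gamma$, then combine the result with the input over $(Q\times\Gamma)\times\Sigma$. This combined one-hot indexes precisely the triple that determines the DPDA move.

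Third, apply Lemma~\ref{lem:map_trans} to the combined one-hot to obtain the next state $\vq_{t+1}$, and apply Lemma~\ref{lem:map_cont} to obtain control bits encoding the stack action $\gamma \in \{\epsilon, X, YZ\}$, i.e.\ whether to pop and which symbol or symbols to push. Fourth, update $\upsilon_s$ by composing the affine pop map $\upsilon_s \mapsto 4\upsilon_s - 2I - 1$ with up to two affine push maps $\upsilon_s \mapsto \tfrac14\upsilon_s + \tfrac12 I + \tfrac14$, the choice among pop, leave-unchanged, or push(es) being selected by the control bits. A routine induction on $t$ then shows that $\vh_t$ encodes exactly the configuration of the DPDA after $t$ steps, so that the input $x$ is accepted iff $\vq_{|x|}$ is the one-hot of a state in $F$, which a final linear read-out detects.

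The hard part will be the conditional, range-controlled stack update in the fourth step: the pop and push maps must be applied or suppressed according to the computed control bits, and this selection is not itself affine. The standard remedy is to compute all candidate next-stack values in parallel and then select the correct one using $\sigma$, which forces one to verify that every intermediate quantity remains inside $[0,1]$ (or the admissible Cantor range) so that $\sigma$ acts as the identity exactly where intended and clamps everywhere else. Carrying out these range checks, together with handling $\epsilon$-moves that consume no input symbol (for instance by feeding blank padding symbols so that each RNN step still corresponds to one transition and the inductive correspondence is preserved), is the most delicate bookkeeping in the construction.
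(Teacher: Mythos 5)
Your proposal is correct and takes essentially the same route as the paper's proof: hidden state $= $ one-hot control state plus a Cantor-set stack encoding, two applications of Lemma~\ref{lem:map_pair} to fuse state, input, and top-of-stack into a single one-hot, Lemmas~\ref{lem:map_trans} and~\ref{lem:map_cont} for the next state and control bits, parallel candidate stacks gated by the saturation trick $\sigma(\stk_t^{op} + \vc_{op} - \bm{1})$, and $\epsilon$-moves handled via a special padding input symbol --- all of which matches the paper's five-layer construction. The only divergence is your stack encoding: you keep a single scalar with one odd digit per symbol of $\Gamma$ (requiring your extra digit-decoding threshold layer to read the top), whereas the paper instead pushes one-hot vectors componentwise onto $|\Gamma|$ parallel base-$4$ binary stacks, so the top read-out $\sigma(\bm{4}\stk_{t-1} - \bm{2})$ yields the one-hot of the top symbol in a single affine-plus-$\sigma$ step; both encodings work.
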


\begin{proof}
	The construction is straightforward and follows by induction. We will show that at the $t$-th timestep, given that the model has information about the state and a representation of the stack, the model can compute the next state and update the stack representation based on the input. More formally, given a sequence $x_1, x_2, \ldots, x_n \in \Sigma^*$, consider that the hidden state vector at the $t$-th timestep is $\vh_t = [\vq_t, \stk_t]$ where $\vq_t \in \bq^{|\Sigma|}$ is a one-hot encoding of the state vector and $\stk_t \in \bq^{|\Gamma|}$ is a representation of the stack based on the cantor-set like encoding. Then, given an input $x_{t+1} \in \Sigma$, we will show how the network can compute $\vh_{t+1} = [\vq_{t+1}, \stk_{t+1}]$. After reading the whole input, a sequence is accepted if $\vq_n$ is in the set of final states $F$ or else it is rejected.
	
	Our construction will use a 5-layer feed forward network that takes as input the vectors $\vh_{t-1}$ and $\vx_t$ at each timestep and produces the vector $\vh_t$. The vectors $\vh_t \in \bq^{|Q|+|\Gamma|}$ will have two subvectors of size $|Q|$ and $|\Gamma|$ containing a one-hot representation of the state of the underlying automaton and a representation of the stack encoded in the Cantor-set representation respectively. For each input symbol $x \in \Sigma$, its corresponding input vector $\vx \in \bq^{|\Sigma|}$ will be a one-hot vector. If the underlying stack takes empty string as input at particular step, the RNN will take a special symbol as input which also have a unique one-hot representation similar to other input symbols.
		
	As opposed to the construction of Siegelmann and Sontag \shortcite{siegelmann1992computational}, which only takes $0$s and $1$s as input and use a scalar to encode a stack of $0$s and $1$s, we will encode one-hot representation of stack symbols in vectors of size $|\Gamma|$. The push and pop operations will always be in the form of one-hot vectors and this will ensure that retrieving the top element provides a one-hot encoding of the stack symbol.
	
	\textbf{Details of the construction.} At each timestep, the model will receive the hidden state vector $\vh_{t-1}= [\vq_{t-1}, \stk_{t-1}]$ and the input vector $\vx_t$ as input. At timestep $0$, the hidden state vector will be $\vh_0 = [\vq_0, \stk_0]$ containing the one-hot representation of the initial state and stack encoding containing $Z_0$. For instance consider $|Q| =3$ and $|\Gamma|=4$. If the one-hot encoding of $q_0$ is $\vq_0= [1,0,0]$ and one-hot encoding of $Z_0$ is $\vz_0= [1,0,0,0]$, then $\stk_0 = \frac{1}{4}\vzero + \frac{1}{2}[1,0,0,0]+ \bm{\frac{1}{4}} = [3/4,1/4,1/4,1/4]$. That is, the vector $\vz_0$ is pushed to the empty stack using the Cantor-set encoding method described in section \ref{subsec:cantor}. Hence, the vector $\vh_0= [1,0,0,3/4,1/4,1/4,1/4]$.
	
	The first layer of the feedforward network $\sigma(\rmW_h\vh_{t-1} + \rmW_x\vx_t + \vb )$ will produce the vector $\vh_{t-1}^{(1)}= [(q_{t-1}, x_t ), \topv, \stk_{t-1}]$, where $\topv \in \bq^{|\Gamma|}$ denotes a one-hot vector representation of the symbol at the top of our stack representation and the subvector $(q_{t-1}, x_t ) \in \bq^{|Q|\times |\Sigma|}$ is a unique one-hot vector for each pair of state $q$ and input $x$. Thus, the vector $\vh_{t-1}^{(1)}$ is of dimension $|Q|.|\Sigma| + 2|\Gamma|$. The vector $(q_{t-1}, x_t )$ can be obtained by using Lemma \ref{lem:map_pair} where $\Phi = Q$ and $\Psi = \Sigma$. The vector corresponding to the symbol at the top of the stack can be easily obtained using the top operation ($\sigma(\bm{4}\stk_{t-1} - \bm{2})$) defined in section \ref{subsec:cantor}.
	
	In the second layer, we will use Lemma \ref{lem:map_pair} again to obtain a unique one-hot vector for each combination of the state, input and stack symbol. The output of the second layer of the feedforward network will be of the form $\vh_{t-1}^{(2)}= [(q_{t-1}, x_t, \topv ), \topv, \stk_{t-1}]$, where the subvector $(q_{t-1}, x_t, \topv)$ is a unique one-hot encoding for each combination of the state $q \in Q$, input $x \in \Sigma$ and a stack symbol $\tau \in \Gamma$. Hence, the vector $\vh_{t-1}^{(2)}$ will be of the dimension $|Q|.|\Sigma|.|\Gamma| + 2|\Gamma|$. Since we already had the vector $(q_{t-1}, x_t)$, and the vector $\topv$, the vector $(q_{t-1}, x_t, \topv)$ can be obtained using Lemma \ref{lem:map_pair} by considering $\Phi =Q \times \Sigma$ and $\Psi= \Gamma$. The primary idea is that the vector $(q_{t-1}, x_t, \topv)$ provides us with all the necessary information required to implement the further steps and produce $\vq_t$ and $\stk_t$.
	
	We can use the vector $(q_{t-1}, x_t, \topv)$ to directly map to $\vq_t$ using a simple linear transformation. To obtain $\stk_t$, we will produce all three candidate stack representations corresponding to push, pop and no-operation. That is, from $(q_{t-1}, x_t, \topv)$ we will obtain $\stk_{t}^{push}$, $\stk_{t}^{pop}$ and $\stk_{t}^{no-op}$. Along with that, using linear transformation we will obtain three control signals $\vc_{push}, \vc_{pop} \text{ and } \vc_{no-op}$. To obtain the final stack representation $\stk_t$, we will implement the following operation
	\begin{equation}
	 \stk_t = \vc_{push}.\stk_{t}^{push} + \vc_{pop}.\stk_{t}^{pop} + \vc_{no-op}.\stk_{t}^{no-op}.
	\end{equation}

	 We will implement the above steps using an additional three layers of feedforward network and thus we will obtain $\vh_t = [\vq_t, \stk_t]$.
	 
	 The third layer of the feedforward network will produce the vector 
	 \[
	 	 \vh_{t-1}^{(3)}= [\vq_{t}, \stk_t, \stk_{t}^{pop}, \stk_{t}^{no-op}, \psh,  \vc_{push}, \vc_{pop}, \vc_{no-op} ]
	 \]
	 
	 The vector $\vq_t$ in $\vh_{t-1}^{(3)}$ can be obtained using Lemma \ref{lem:map_trans} given the vector $(q_{t-1}, x_t, \topv)$ in $\vh_{t-1}^{(2)}$. The vector $\stk_{t}^{pop}$ via the Cantor set encoding method using the transformation $\bm{4}\stk_{t} - \bm{2}\topv - \bm{1}$ over $\vh_{t-1}^{(2)}$. The vector $\stk_{t}^{no-op}$ can be obtained using the Identity Transformation. The vector $\psh$ can be obtained using Lemma \ref{lem:map_trans}. If for a given transition the stack operation is not a push operation then $\psh = \vzero$. The vector $\vc_{push} = \bm{1}$ if the current stack operation is Push and it is $\vzero$ otherwise. Similarly, the vectors $\vc_{pop}$ and $\vc_{no-op}$ are $\bm{1}$ if the current operation is Pop or No-operation respectively and are $\vzero$ otherwise. The vectors $\vc_{push}, \vc_{pop},\text{ and } \vc_{no-op}$ can be obtained using Lemma \ref{lem:map_cont}. During any timestep, only one of them is $\bm{1}$ vector and rest of them are zero vectors. The vectors $\vc_{op}$s can be seen as control signals. That is, the candidate stack representation will be used if its corresponding control signal is $\bm{1}$ or else the candidate stack representation will be transformed to zero vector.
	 
	 In the fourth layer, the feedforward network will produce the following vector,
	 \[
	 \vh_{t-1}^{(4)}= [\vq_{t},  \stk_{t}^{pop}. \vc_{pop}, \stk_{t}^{no-op}. \vc_{no-op}, \stk_{t}^{push}. \vc_{push}]
	 \]
	 
	 For any operation denoted by $op \in \{push, pop, no-op\}$, the vector 
	 \begin{equation}\label{eq:mul_op}
	 \stk_t^{op}. \vc_{op} = \left\{  
	 \begin{array}{cc}
	 \stk_t^{op} & \quad \text{if  } \vc_{op} = \bm{1}, \\
	 \vzero& \quad \text{if  } \vc_{op} = \vzero.
	 \end{array}
	 \right.
	 \end{equation}
	 Since we are using first order RNNs and hence multiplicative operations are not directly possible. To implement the operation in equation \ref{eq:mul_op} via linear transformations with saturation linear sigmoid activation, first note that $0 \leq \stk_t \leq 1$ and thus $\sigma(\stk_t - \bm{1}) = \vzero$. Using that we can implement the operation in equation \ref{eq:mul_op} by, 
	 
	 \begin{equation}\label{eq:lin_op}
	 \sigma(\stk_t^{op} + \vc_{op} - \bm{1})= \left\{  
	 \begin{array}{cc}
	 \stk_t^{op} & \quad \text{if  } \vc_{op} = \bm{1}, \\
	 \vzero& \quad \text{if  } \vc_{op} = \vzero.
	 \end{array}
	 \right.
	 \end{equation}
	 
	 In the fourth layer of the feedforward network we obtain the candidate stack representation for Push operation via linear operations as described in section \ref{subsec:cantor} along with adding $\vc_{push}$ and adding $-\bm{1}$ via bias vectors. For the candidate stack representations of Pop and No-op operations, we simply add their control vectors and subtract by $\bm{1}$ via bias vectors. In the vector $\vh_{t-1}^{(4)}$, only one of the stack representation will have nonzero values and the other representations will be zero vectors depending on the control signals $\vc_{push}, \vc_{pop},\text{ and } \vc_{no-op}$.
	 
	 The fifth layer will simply sum the three candidate stack vectors along with their control signals to obtain,
	 
	  \[
	 \vh_{t-1}^{(5)}= [\vq_{t},  \stk_{t}^{pop}. \vc_{pop} + \stk_{t}^{no-op}. \vc_{no-op} + \stk_{t}^{push}. \vc_{push}]
	 \]
	 which is equal to,
	 \[
	 [\vq_{t}, \stk_t] = \vh_t
	 \]
	 
	 which is what we wanted to show.

\end{proof}

The above construction is a direct simulation of Deterministic Pushdown Automaton via RNNs in real-time. The construction of \newcite{siegelmann1992computational} first takes all the inputs and then takes further processing time. However, in practice, RNNs process the inputs and produce outputs in real-time. 

Note that, the dependence on precision is primarily determined by the depth of the stack. That is, as the number of elements in the stack keep increasing, the stack representation gets exponentially smaller due to the Cantor-set encoding scheme. If for a set of input strings, the depths are bounded, then for some finite precision, an RNN based on the above construction will be able to recognize strings of arbitrary lengths.

\section{Experimental Details}\label{sec:exp_details}

\begin{table*}[t]
\small{\centering

\begin{tabular}{P{6em}P{3em}P{4em}P{4em}P{4em}P{4em}P{4em}P{3em}P{3em}P{3em}}
\toprule
&\multicolumn{3}{c}{\textbf{Training Data}} &\multicolumn{6}{c}{\textbf{Test Data}}\\
\cmidrule(lr){2-4}\cmidrule(lr){5-10}
Dataset & Size  & Length Range & Depth Range & Size per Bin & Bin-2 Length Range & Bin-2 Depth Range & Length Increments & Depth Increments & Number of Bins \\
\midrule
Unbounded Length and Depth & 10000& [2, 50] & Unrestricted & 1000 & [52, 100] & Unrestricted & 50 & NA & 2\\
\midrule
Bounded Depth & 10000 & [2, 50] & [1, 10] & 1000 & [52, 100] & [1, 10] & 50 & 0 & 3\\
\midrule
Bounded Length & 10000& [2, 100] & [1, 15] & 1000 & [2, 100] & [16, 16] & 0 & 1 & 6\\
\bottomrule
\end{tabular}
\caption{\label{tab:stats} Statistics of different datasets used in the experiments. Note that the distribution of the first bin is always defined by the training set, hence for test set we report the statistics from the second bin. The depth and length bounds for the other bins can be obtained by considering the lower bound of a bin $i$ as one plus the upper bound of the previous bin $i-1$ and obtaining the upper bounds by adding the Length and Depth Increments to the previous bin's upper bounds.}

}
\end{table*}

\begin{table*}[t]
\normalsize{\centering
\begin{tabular}{P{9em}P{15em}}
\toprule
\textbf{Hyperparameter} & \textbf{Bounds}\\
\midrule
Hidden Size & [4, 256]\\
Heads & [1, 4] \\
Number of Layers & [1, 2]\\
Learning Rate & [1e-2, 1e-3]\\
Position Encoding & [True, False]\\
\bottomrule
\end{tabular}
\caption{\label{tab:hyps} Different hyperparameters and the minimum and maximum values considered for each of them. Note that certain parameters like Heads and Position Encodings are only relevant for Transformer based models and not for LSTMs. Note that, for Dyck-2 we also found them to generalize to Bin-1A with bounds mentioned in \newcite{suzgun2019lstm} (such as with hidden size 8).}
}
\end{table*}

We run our experiments on three Context Free Languages namely \dyck{2}, \dyck{3} and \dyck{4} for LSTM and Transformer based models. Three separate datasets were generated for each language, to run the ablations, details of which are given in Table \ref{tab:stats}. For each of these experiments we do extensive hyperparameter tuning before reporting the final results. Table \ref{tab:hyps} provides different hyperparameters considered in our experiments and their bounds. All in all, this resulted in about 56 different settings for each dataset for RNNs and about 144 settings for Transformers. While reporting the final scores we take an average of the accuracies corresponding the top-5 hyperparameter settings.

All of our models were trained using RMSProp Optimizer with a smoothing constant $\alpha$ of 0.99. For each language and its corresponding datasets, we use a batch size of 32 and train for 100 epochs. In case an accuracy of 0.99 is acheived for all of the bins before completing 100 epochs we stop the training process at that point. We run all of our experiments on 4 Nvidia Tesla P100 GPUs each containing 16GB memory.

\textbf{Probing Details} For designing a probe for extracting the depth of the underlying stack of \dyck{2} substrings from the cell states of pretrained LSTMs, we used a single hidden layer Feed-Forward network. The hidden size of the network was kept $32$ and it was trained using Adam Optimizer \cite{kingma2014adam} with a batch size of 200. The accuracy on the validation set was computed by only considering the predictions to be correct for a sequence if it predicted the correct depth at every step of that sequence. In the second set of experiments that aimed to predict the elements of the stack, we trained the LSTM model on the NCP task along with an auxillary loss for predicting the top-10 elements of the stack. For the computation of the auxillary loss, we added 10 parallel linear layers on the top of the LSTM's output with $i$-th linear layer tasked to predict if the $i$-th element of the stack was i) a round opening bracket or ii) a square opening bracket or iii) If no element was present at that position.  For each of these 10 linear layers we compute Cross Entropy Loss which are then averaged to obtain the auxillary loss. The final loss is computed as:
\begin{equation}
	L = L_{NCP} + \lambda L_{aux}
\end{equation}
where $L_{NCP}$ is the loss obtained from the next character prediction task and $L_{aux}$ is the auxillary loss just described and we use $\lambda = \frac{1}{20}$ in our experiments. The stack extraction auxillary task is evaluated by computing Accuracy and Recall metrics for each stack element. The accuracy for the $i$-th element is computed by considering if the model can predict the $i$-th element correctly at each step of a sequence. Since there will be a fewer cases for smaller lengths containing elements at higher depths, we also report Recall for each $i$-th stack element, where we only consider if the model can correctly predict the sequences containing at least one occurrence of depth $i$.

\section{Robustness Experiments}

\begin{table*}[t]
	\scriptsize{\centering
		\begin{tabular}{P{6em}p{7em}P{8em}P{8em}P{8em}}
			\toprule
			\textbf{Language} & \textbf{Model} &
			\textbf{Validation Set 1 $p = 0.5, q = 0.25$}&
			\textbf{Validation Set 2 $p = 0.4, q = 0.35$}&
			\textbf{Validation Set 3 $p = 0.6, q = 0.15$} \\
			
			\midrule
			\multirow{2}{*}{\textbf{\dyck{2}}} & \textbf{LSTM} & 99.5 & 99.6 & 99.1\\
			&\textbf{Transformer} & 95.1 & 94.7 & 94.4\\
			\midrule
			\multirow{2}{*}{\textbf{\dyck{3}}}&\textbf{LSTM} & 97.3 & 98.8 & 96.5\\
			&\textbf{Transformer} & 87.7 & 87.8 & 87.3\\
			\midrule
			\multirow{2}{*}{\textbf{\dyck{4}}}&\textbf{LSTM} & 97.8 & 96.7 & 94.9\\
			&\textbf{Transformer} & 92.7 & 89.8 & 90.9\\
			\bottomrule
		\end{tabular}
		\caption{\label{tab:rob_results} The performance of neural models on considered Dyck languages for data generated from three different distributions. Validation Set 1 was constructed from the same distribution used to generate the training data, while the other two were generated from different distributions. All the validation sets had strings with the lengths in the interval [2, 50] and there was no restriction kept on the depth of these strings.}
	}
\end{table*}

To ensure that our results didn't overfit on the training distribution we did some robustness experiments to check efficacy of the considered neural models. As a reminder, the PCFG for \dyck{n} languages is given by the following derivation rules: 
\begin{align}
S &\rightarrow (_iS)_i, \text{  with probability $p$}\\
S &\rightarrow SS, \text{  with probabililty $q$}\\
S &\rightarrow \epsilon, \text{ with probbaility $1 - (p+q)$}\\
\end{align}

For the experiments described in the main paper we used $p = 0.5$ and $q = 0.25$. To check the generalization ability of our models we checked whether a model trained with the strings generated using these values can generalize on Dyck words obtained from a different distribution. Table \ref{tab:rob_results} shows the accuracies obtained by LSTMs and Transformers on data generated from different distributions. It can be observed from the results that the performance of the models for all languages remain more or less the same across different distributions.

A model trained on the Next Character Prediction task can also be used to generate strings of the language it was trained on by starting from an empty string and then exhaustively iterating over all possible valid characters predicted by the model. We used this idea to check if a pretrained LSTM model can indeed generate all possible \dyck{2} strings upto a certain length (since the number of possible strings will grow exponentially with increasing lengths). For a maximum length of $10$, there exists a total of $1619$ valid \dyck{2} strings. When we used a pretrained model to exhaustively generate the valid strings, we observed that it produced exactly those $1619$ strings, no more and no less.

\section{Additional Results}

The depth generalization results for the neural models on \dyck{3} and \dyck{4}, are given in Figure \ref{fig:depth_gen_dycks34}. Similar to \dyck{2}, here also we see a gradual drop in performance as we move to the higher depths but Transformers perform relatively better than LSTMs.

\begin{figure}[h!]
\centering
\begin{subfigure}[b]{0.45\textwidth}
\centering
\includegraphics[width=.9\textwidth]{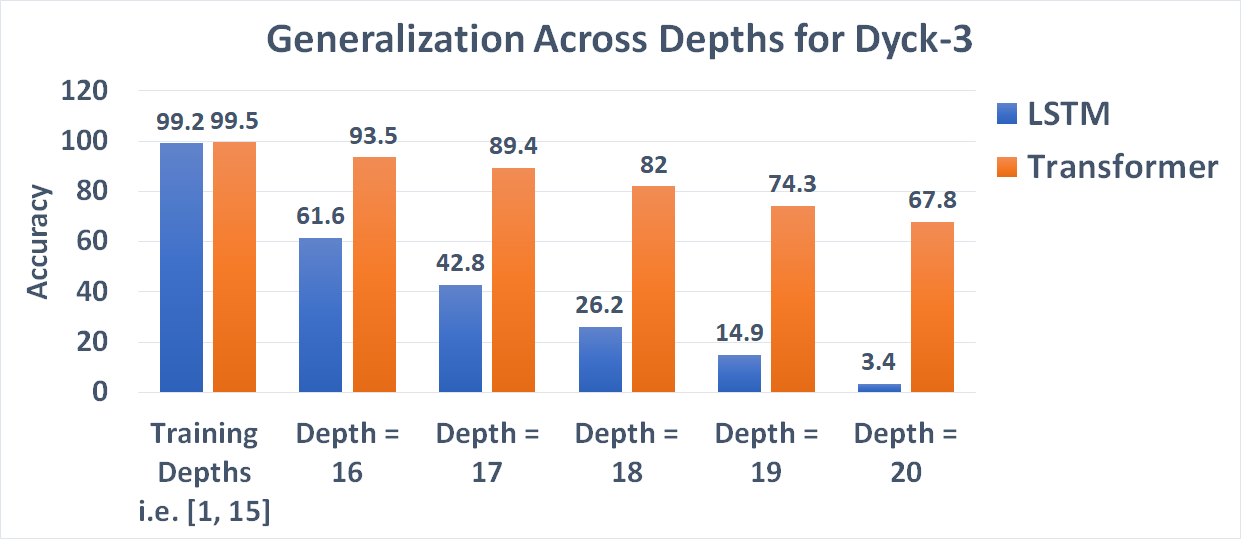}
\caption{\label{fig:depth_gen_dyck3}}
\end{subfigure}%
\begin{subfigure}[b]{0.45\textwidth}
\centering
\includegraphics[width=.9\textwidth]{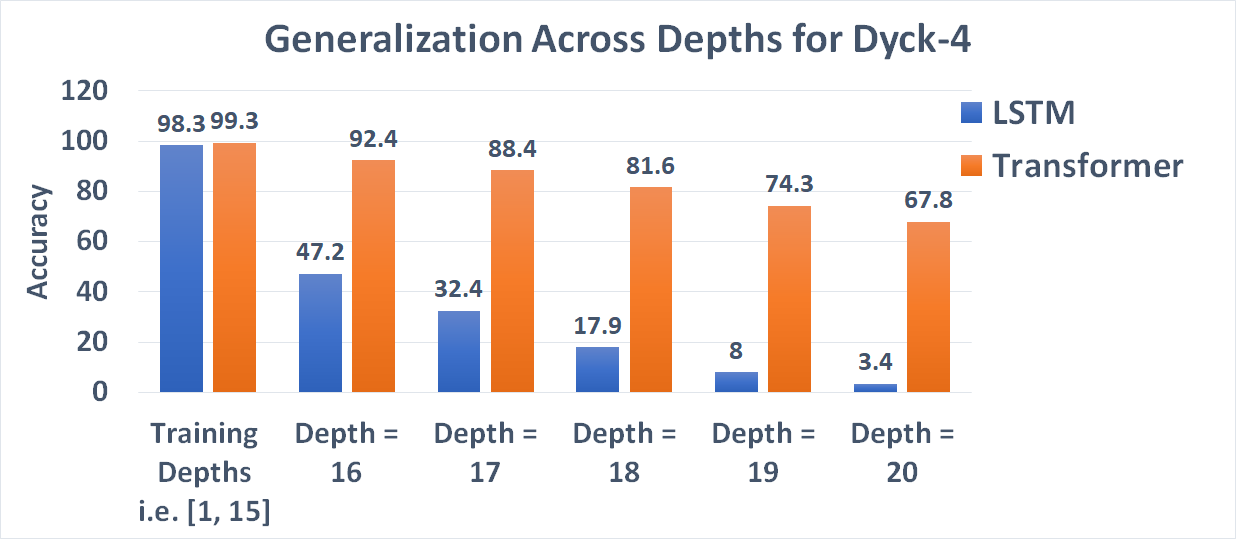}
\caption{\label{fig:depth_gen_dyck4}}
\end{subfigure}
\caption{\label{fig:depth_gen_dycks34} Generalization of LSTMs and Transformers on higher depths for (a) \dyck{3} and (b) \dyck{4}. The lengths of strings in the training set and all validation sets were fixed to lie between 2 to 100.}
\end{figure}

\end{document}